\newtheorem{definition}{Definition}
\pgfplotsset{compat=newest}
\pgfplotsset{%
     layers/standard/.define layer set={%
         background,axis background,axis grid,axis ticks,axis lines,axis tick labels,pre main,main,axis descriptions,axis foreground%
     }{
         grid style={/pgfplots/on layer=axis grid},%
         tick style={/pgfplots/on layer=axis ticks},%
         axis line style={/pgfplots/on layer=axis lines},%
         label style={/pgfplots/on layer=axis descriptions},%
         legend style={/pgfplots/on layer=axis descriptions},%
         title style={/pgfplots/on layer=axis descriptions},%
         colorbar style={/pgfplots/on layer=axis descriptions},%
         ticklabel style={/pgfplots/on layer=axis tick labels},%
         axis background@ style={/pgfplots/on layer=axis background},%
         3d box foreground style={/pgfplots/on layer=axis foreground},%
     },
 }
\newcommand{\ra}[1]{\renewcommand{\arraystretch}{#1}}
\newcolumntype{R}{>{$}r<{$}}
\newcommand{\algname}{COBeTS}
\newcommand{\ep}{\ensuremath{e}}
\newcommand{\argmax}{\operatornamewithlimits{arg\,max}}
\newcommand{\blue}[1]{\textcolor{blue}{#1}}
\newtheorem{proposition}{Proposition}
\title{Constrained Hierarchical Monte Carlo Belief-State Planning
}
\author {
    Arec Jamgochian,$^1$ Hugo Buurmeijer,$^1$ Kyle H. Wray,$^1$ Anthony Corso,$^1$ Mykel J. Kochenderfer$^1$ 
\thanks{$^{1}$Stanford University, Stanford, CA 94305 USA
{\tt\small \{arec, hbuurmei, kylewray, acorso, mykel\}@stanford.edu}}%
\thanks{$^\dagger$This material is based upon work supported by the National Science Foundation Graduate Research Fellowship Program under Grant No. DGE-1656518. 
This work is also supported by the COMET K2---Competence Centers for Excellent Technologies Programme of the Federal Ministry for Transport, Innovation and Technology (bmvit), the Federal Ministry for Digital, Business and Enterprise (bmdw), the Austrian Research Promotion Agency (FFG), the Province of Styria, and the Styrian Business Promotion Agency (SFG). 
}
}
\begin{document}

\maketitle

\begin{abstract}
Optimal plans in Constrained Partially Observable Markov Decision Processes (CPOMDPs) maximize reward objectives while satisfying hard cost constraints, generalizing safe planning under state and transition uncertainty. Unfortunately, online CPOMDP planning is extremely difficult in large or continuous problem domains. In many large robotic domains, hierarchical decomposition can simplify planning by using tools for low-level control given high-level action primitives (options). 
We introduce Constrained Options Belief Tree Search (\algname) to leverage this hierarchy and scale online search-based CPOMDP planning to large robotic problems. 
We show that if primitive option controllers are defined to satisfy assigned constraint budgets, then \algname\ will satisfy constraints anytime. Otherwise, \algname\ will guide the search towards a safe sequence of option primitives, and hierarchical monitoring can be used to achieve runtime safety. We demonstrate \algname\ in several safety-critical, constrained partially observable robotic domains, showing that it can plan successfully in continuous CPOMDPs while non-hierarchical baselines cannot. 
\end{abstract}

\section{Introduction}

Planning in robotics requires robust regard for safety, which often necessitates careful consideration of uncertainty. Two factors contributing to uncertainty include a) the true state of the robot and surrounding environment (\textit{state} uncertainty), and b) how that state will evolve given robot actuation (\textit{transition} uncertainty). Constrained partially observable Markov decision processes (CPOMDPs) provide a general mathematical framework for safe planning under state and transition uncertainty by imposing constraints~\cite{poupart2015approximate}. 

While offline CPOMDP planning algorithms are able to build policies for discrete environments with thousands of possible states~\cite{poupart2015approximate}, building policies in many robotic domains that are typically large or continuous necessitates online planning. Online CPOMDP planning has been scaled to large or continuous state spaces by combining Monte Carlo tree search with Lagrangian exploration and dual ascent~\cite{lee2018monte} and has recently been extended to domains with continuous action and observation spaces~\cite{jamgochian2023online}. However, search-based planning with large or continuous action and observation spaces is still extremely difficult. Common techniques try to artificially limit the width of the search tree by restricting the sets of successor nodes~\cite{couetoux2011continuous,sunberg2018online}. Unfortunately, it can often be difficult to guarantee the inclusion of promising actions in the reduced search set. This problem accelerates as searches deepen, which can be especially problematic when problems require deep searches to find promising action sequences. Models can be learned for biasing action selection~\cite{mern2021bayesian,cai2022closing,moss2023betazero}, however, this requires generating data from past experience.

\begin{figure}[t]
\centering
     \begin{subfigure}[b]{0.230\textwidth}
         \centering
         \includegraphics[width=1.01\textwidth]{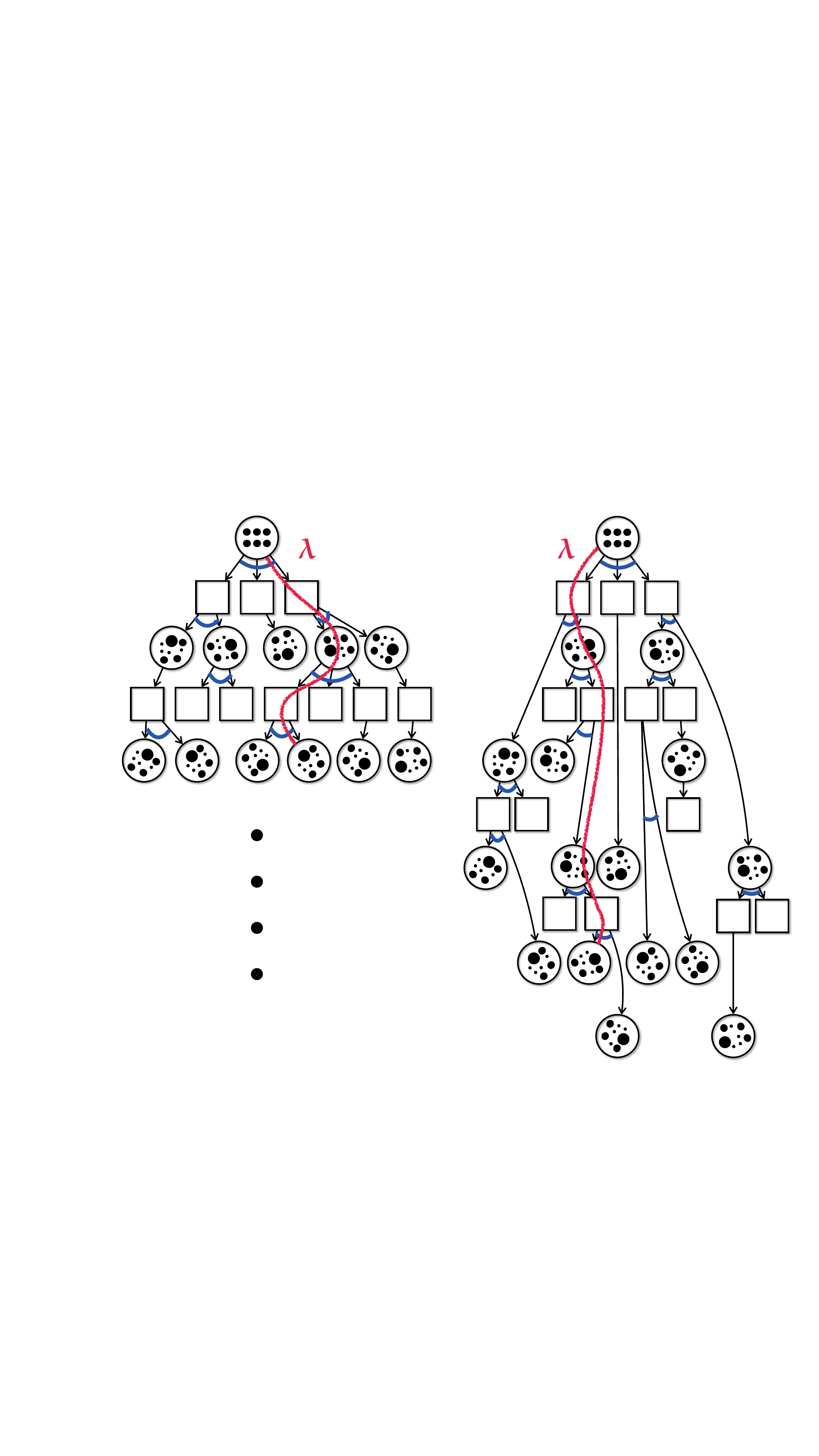}
         \caption{CPFT-DPW Tree~\cite{jamgochian2023online}}
     \end{subfigure}
     \hfill
     \begin{subfigure}[b]{0.233\textwidth}
         \centering
         \includegraphics[width=1.01\textwidth]{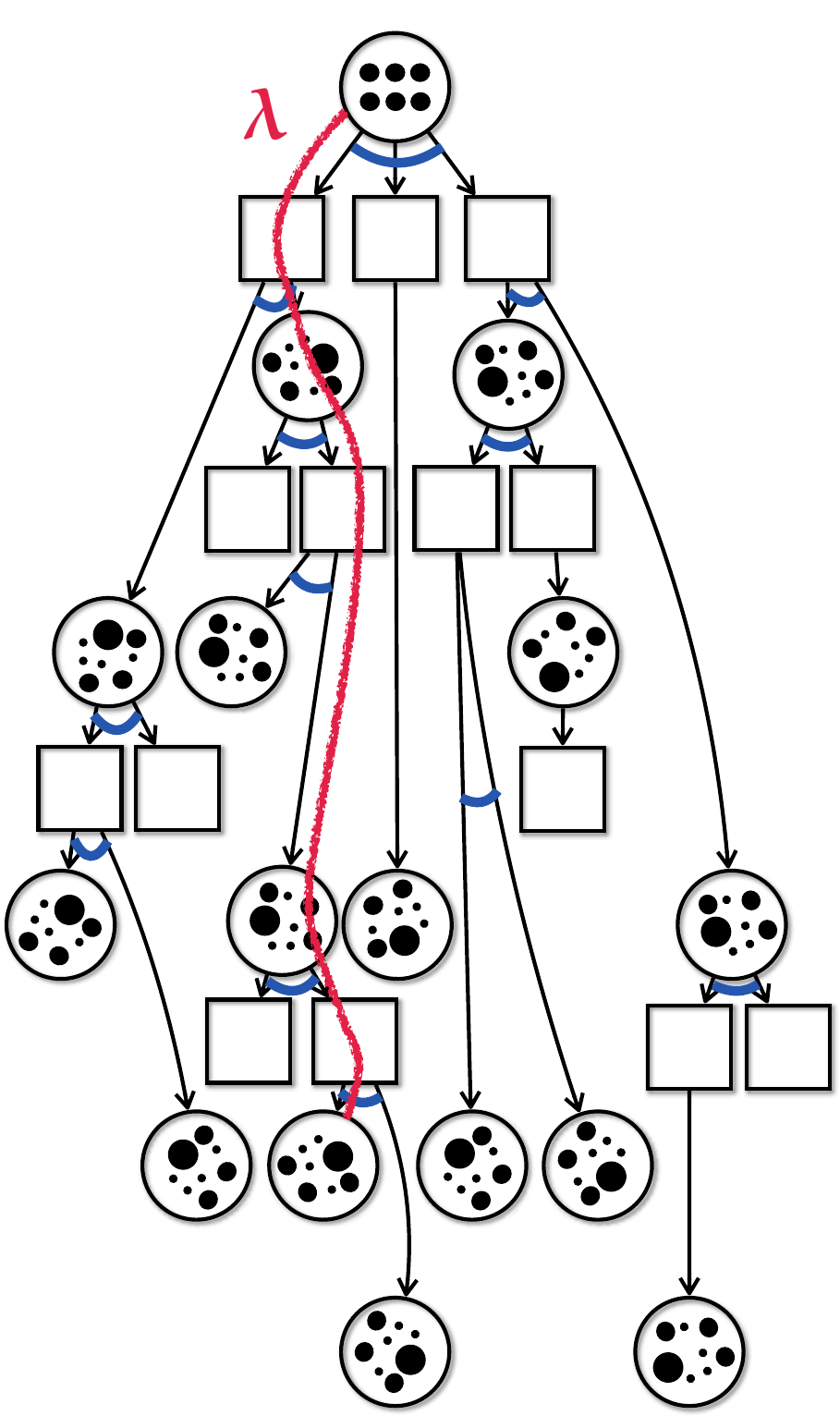}
         \caption{\algname\ Tree}
     \end{subfigure}
\caption{In CPFT-DPW~\cite{jamgochian2023online} (left), progressive widening (blue) is used to limit the branching factor of the Monte Carlo belief-state search tree, while dual parameters (red) are optimized to guide the search towards constraint satisfaction. \algname\ (right), leverages a hierarchy to decompose the partially observable planning problem, resulting in a search tree over options and semi-Markov belief transitions, with potentially far fewer nodes.} 
\label{fig1}
\vspace{-0.2cm}
\end{figure}

In many robotic planning applications, low-level controllers can be easily crafted for different high-level action primitives using domain expertise or commonly available tools (e.g. trajectory optimization). Decomposing search hierarchically over these action primitives (macro-actions, options) can \textit{significantly} reduce the size of the search tree. Decomposition can reduce the space of actions to search over, but more importantly, reduces the search depth required to plan to the same horizon.

In this paper, we introduce Constrained Options Belief Tree Search (\algname), a Monte Carlo tree search algorithm that leverages hierarchical decomposition to scale online CPOMDP planning. \algname, depicted in~\Cref{fig1}, combines the options framework to handle hierarchies~\cite{sutton1999between}, particle filter tree search to search over beliefs~\cite{sunberg2018online}, 
progressive widening to limit the number of observation \textit{sequences} emitted from each option node, and Lagrangian exploration with dual ascent to guide the search towards safe primitives~\cite{lee2018monte}. We show that if options can satisfy assigned constraint budgets, \algname\ satisfies constraints anytime. If not, dual ascent will guide the search toward safety, satisfying constraints in the limit. In our experiments, we demonstrate \algname\ on a toy domain, a carbon sequestration problem, and two robot localization problems. In each of these problems, \algname\ significantly outperforms state-of-the-art baselines that plan without hierarchical decomposition. Additionally, we demonstrate that \algname\ can satisfy constraints anytime with feasible options, and \algname\ with many options can outperform baselines because of the overall reduction in tree size induced by a hierarchy. To our knowledge, this is the first work explicitly formulating hierarchical CPOMDP planning.

In summary, our contributions are to:
\begin{itemize}
\item introduce \algname\ to perform online CPOMDP planning in large or continuous domains by using hierarchical decomposition,
\item examine its anytime safety properties and tree complexity reduction, and
\item demonstrate \algname\ on four constrained partially observable problems, including two robotic problems where non-hierarchical baselines fail to plan successfully.
\end{itemize}

\section{Background}
\subsection{CPOMDPs}

A CPOMDP is defined by the tuple $(\mathcal{S}, \mathcal{A}, \mathcal{O}, T, Z, R, \vect{C}, \hat{\vect{c}}, \gamma)$ consisting of state, action, and observation spaces $\mathcal{S}, \mathcal{A}, \mathcal{O}$, a transition model $T$ mapping states and actions to a distribution over resulting states, an observation model $Z$ mapping an underlying transition to a distribution over emitted observations, a reward function $R$ and cost function $\vect{C}$ mapping an underlying state transition to an instantaneous reward and vector of instantaneous, non-negative costs, a vector of cost budgets $\hat{\vect{c}}$, and a discount factor $\gamma$. A policy $\pi$ generates actions from an initial state distribution $b_0$ and a history of actions $a_{0:t}$ and observations $o_{1:t}$, which together can be represented concisely as an instantaneous belief distribution over states $b_t$ where $b_t(s)=p(s_t=s\mid b_0, a_{0:t}, o_{1:t})$. An optimal policy acts to maximize expected discounted reward while satisfying expected discounted cost budgets, that is, to optimize the following: 
\begin{align}
\max_\pi & \ V_R^\pi(b_0)= \mathbb{E}_\pi \left[\sum_{t=0}^\infty \gamma^t R(b_t,a_t) \mid b_0 \right]  \label{eq:cpomdp-objective-reward} \\
\text{s.t.   } & V_{C_k}^\pi(b_0)= \mathbb{E}_\pi \left[\sum_{t=0}^\infty \gamma^t C_k(b_t,a_t) \mid b_0 \right] \leq \hat{c}_k \ \forall \ k\text{,} \label{eq:cpomdp-objective-constraints}
\end{align}
where belief-based reward and cost functions return the expected reward and costs from transitions from states in those beliefs~\cite{sondik1978optimal,kochenderfer2022algorithms,altman1999constrained,poupart2015approximate,isom2008piecewise}. 

Offline CPOMDP planning algorithms solve for compact policies that map from any history to instantaneous actions. Offline CPOMDP solution methods include dynamic programming~\cite{isom2008piecewise,kim2011point}, approximate linear programming~\cite{poupart2015approximate}, column generation~\cite{walraven2018column}, and projected gradient ascent~\cite{wray2022scalable}. Unfortunately, offline solutions are limited to problems with small state, action, and observation spaces. 

Online CPOMDP planning algorithms can generate better solutions by searching across immediately reachable beliefs~\cite{undurti2010online}. CC-POMCP~\cite{lee2018monte} plans online in extremely large state spaces by performing Lagrangian-guided partially observable Monte Carlo tree search~\cite{silver2010monte}, with dual ascent to optimize the Lagrange multipliers. 
CPFT-DPW and CPOMCPOW~\cite{jamgochian2023online} scale constrained online Monte Carlo planning to large action and observation spaces by limiting the search tree branching factor using progressive widening~\cite{couetoux2011continuous,sunberg2018online}.

\subsection{Hierarchical Planning}

Hierarchical planning simplifies difficult planning problems by favorably decomposing them into more tractable subproblems~\cite{sacerdoti,parr1997reinforcement}.
Applications to robotics date back to the use of high-level task planning and low-level execution and runtime monitoring on the Shakey robot~\cite{fikes1972learning}. One common framework for planning hierarchically is through options \cite{sutton1999between}, in which a primitive controller (option) ${\hat{a}}$ is chosen from a finite set of options $\hat{\mathcal{A}}$ and executed until termination, upon which a new valid option is chosen. A partially observable options model augments an underlying POMDP problem with the set $\{\mathcal{I}_{\hat{a}}, \pi^L_{\hat{a}}, \beta_{\hat{a}}\}_{{\hat{a}} \in \hat{\mathcal{A}}}$, that for every option, defines a set of belief-states $\mathcal{I}$ from which it can be initialized, a low-level control policy $\pi^L(a \mid b)$ returning actions from the underlying action space $\mathcal{A}$, and a function $\beta$ that returns the probability that an option will terminate in a given belief. 
When beginning execution of a new option, the high-level policy $\pi$ must choose from the subset of options that are available from the instantaneous belief, $\{\hat{a} \mid b_t \in  \mathcal{I}_{\hat{a}}\}$. If desired, the underlying action space can be included in the set of options.

An implementation of hierarchical policy execution in CPOMDPs using the options framework is depicted in \Cref{alg:common}. During execution, the low-level policy acts in the environment (lines 6--7), updates the cost budget with the expected instantaneous cost (line 8), and updates the state belief using a new observation (line 9). A high-level \texttt{SelectOption} policy chooses a new option whenever an executing option terminates (lines 4--5).
\begin{algorithm}[t]
    \caption{Hierarchical Execution in an Options CPOMDP} \label{alg:common}
    \begin{algorithmic}[1]
        \Procedure{Execute}{$b_0,\hat{\vect{c}}$}
            \State $\hat{a} \gets \emptyset, b \gets b_0$
            \While $\neg \Call{Terminal}{b}$
                \If $\hat{a} = \emptyset \lor \Call{Terminate}{\hat{a},b}$
                \State $\hat{a} \gets \Call{SelectOption}{b,\hat{\vect{c}}}$
                \EndIf
                \State $a \gets \Call{Action}{\hat{a},b}$
                \State $o \gets \Call{Step}{b, a}$
                \State $\hat{\vect{c}} \gets \left[\frac{\hat{\vect{c}}-\vect{C}(b,a)}{\gamma}\right]^+$
                \State $b \gets \Call{UpdateBelief}{b,a,o}$
            \EndWhile
        \EndProcedure
    \end{algorithmic}
\end{algorithm}

Related previous work performed online hierarchical planning for \textit{unconstrained} POMDPs~\cite{vien2015hierarchical} by combining partially observable MCTS with MaxQ~\cite{dietterich1998maxq}, an alternative framework for hierarchical planning. Though not explicitly using hard constraints, additional recent work has focused on safe planning in large partially observable robotic domains by using a hierarchical information roadmap to manage local risks safely on large exploration missions~\cite{kim2021plgrim}. 
For problems with favorable state partitions (e.g. path planning on a grid of neighboring states), work has also been done to solve large, \textit{fully-observable}, constrained MDPs hierarchically by combining global CMDP solutions over coarse partitions with local solutions over underlying states~\cite{feyzabadi2017planning}. 

\section{Methodology}

\subsection{Preliminaries}

We formulate hierarchical planning in a CPOMDP using the options framework. Options induce two processes: a low-level Markov process over the underlying state, action, and observation space, and a high-level semi-Markov process between successive option calls.
With option policies defined a priori, the underlying model is a constrained partially observable semi-Markov decision process (CPOSMDP).
Consequently, we now briefly cover the CSMDP, its generalization as CPOSMDP, and the CPOSMDP's equivalent belief-state CSMDP.
More details can be found in related work such as by~\citeauthor{vien2015hierarchical}~\cite{vien2015hierarchical}.

A CSMDP is defined in a way similar to a CMDP, with the inclusion of a now semi-Markov transition function $T$ that defines the joint probability of the successor state alongside the number of steps required to transition given a state and action, $p(s', \tau \mid s, a)$.
Decisions are made at successive decision epochs $\ep$, each indexing a time step $t_\ep$ when an action $a_\ep$ begins executing and its duration $\tau_\ep$ where $t_{\ep + 1} = t_{\ep} + \tau_{\ep}$.

Similarly, a CPOSMDP is defined with the tuple $(\mathcal{S}, \mathcal{A}, \mathcal{O}, P, R, \vect{C}, \hat{\vect{c}}, \gamma)$, where $P$ models the joint semi-Markov transition and observation functions $p(s', \vect{o}, \tau \mid s, a)$, where $\vect{o} \in \mathcal{O}^\tau$ is the sequence of emitted observations in a $\tau$-step semi-Markov transition.
As with POMDPs and their equivalent belief-state MDP representations, we can define a CPOSMDP equivalently as a belief-state CSMDP $(\tilde{\mathcal{S}}, \mathcal{A}, \tilde{T}, \tilde{R}, \tilde{\vect{C}}, \hat{\vect{c}}, \gamma)$.
The belief states are $b \in \tilde{\mathcal{S}} = \triangle(\mathcal{S})$, with transitions $\tilde{T}(b, a, b'=ba\vect{o}, \tau) = \sum_{s, s' \in \mathcal{S}}b(s)b'(s')P(s, a, s', \vect{o}, \tau)$, rewards $\tilde{R}(b, a) = \sum_{s \in \mathcal{S}}b(s)R(s, a)$, and costs $\tilde{\vect{C}}(b, a) = \sum_{s \in \mathcal{S}}b(s)\vect{C}(s, a)$. 

\begin{proposition}
    \label{prop:belief-state-CSMDP-equivalent-to-CPOSMDP}
    For all policies $\pi$, the reward value functions and cost value functions of the belief-state CSMDP are equal to those of the CPOSMDP, that is, for all $b\in\tilde{\mathcal{S}}, \tilde{V}^\pi_R(b) = V^\pi_R(b)$ and $\tilde{\vect{V}}^\pi_{C}(b)=\vect{V}^\pi_{C}(b)$.
\end{proposition}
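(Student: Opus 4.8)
The plan is to reduce the claim to the familiar equivalence between a partially observable model and its belief-state model, and then to recover the value functions by the tower property. A useful first observation is that the constraints enter nowhere in the \emph{definition} of the value functions: each of $V^\pi_R$ and $V^\pi_{C_k}$ is just an ordinary, unconstrained expected discounted return for the per-epoch signal $R$ (resp.\ $C_k$), and the budgets $\hat{\vect{c}}$ only decide which $\pi$ are feasible. So it suffices to prove the identity for a single generic signal, which I will write $\rho$ (standing for $R$ or any $C_k$), with belief-state version $\tilde{\rho}(b,a)=\sum_{s}b(s)\rho(s,a)$, and then instantiate $\rho=R$ and $\rho=C_k$ for every $k$.

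First I would set up the history correspondence. A CPOSMDP history through decision epoch $\ep$ is $h_\ep=(b_0,a_0,\vect{o}_1,\tau_1,\dots,\vect{o}_\ep,\tau_\ep)$, and the semi-Markov belief update $b\mapsto ba\vect{o}$ maps it to a unique belief $b_\ep\in\triangle(\mathcal{S})$; the key Bayes-filter fact, proved by induction on $\ep$ from the kernel $P(s,a,s',\vect{o},\tau)$, is that $p(s_\ep=s\mid h_\ep)=b_\ep(s)$. Consequently $\pi$ — whether history-dependent or belief-based — acts identically in both models. Next, by induction on $\ep$, I would show that the joint law of $(b_0,a_0,\tau_1,b_1,\dots,b_\ep,a_\ep)$ is the same under the two dynamics: given $(h_\ep,a_\ep)$, the CPOSMDP emits $(\vect{o}_{\ep+1},\tau_{\ep+1})$ with probability $\sum_{s,s'}b_\ep(s)P(s,a_\ep,s',\vect{o}_{\ep+1},\tau_{\ep+1})$ and updates the belief deterministically, and aggregating over the observation sequences that yield a given posterior $b'$ reproduces exactly $\tilde{T}(b_\ep,a_\ep,b',\tau_{\ep+1})$ — merging distinct observation sequences into the same $b'$ is harmless because all downstream rewards, costs, and dynamics depend on the history only through the belief.

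Finally I would expand the value functions epoch by epoch. With $t_\ep=\sum_{\ep'<\ep}\tau_{\ep'}$ a measurable function of the (shared) duration sequence, the tower property gives, for each $\ep$,
\begin{align*}
\mathbb{E}_\pi\!\left[\gamma^{t_\ep}\rho(s_\ep,a_\ep)\right]
&= \mathbb{E}_\pi\!\left[\gamma^{t_\ep}\,\mathbb{E}\!\left[\rho(s_\ep,a_\ep)\mid h_\ep,a_\ep\right]\right] \\
&= \mathbb{E}_\pi\!\left[\gamma^{t_\ep}\textstyle\sum_{s}b_\ep(s)\,\rho(s,a_\ep)\right]
= \mathbb{E}_\pi\!\left[\gamma^{t_\ep}\,\tilde{\rho}(b_\ep,a_\ep)\right].
\end{align*}
Summing over $\ep$ — justified by absolute convergence since $\gamma<1$ and $\rho$ is bounded (for nonnegative costs one may instead invoke monotone convergence), together with the fact that the law of $\{(b_\ep,a_\ep,t_\ep)\}_\ep$ agrees across the two models — yields $V^\pi_\rho(b)=\tilde{V}^\pi_\rho(b)$ for all $b$. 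Taking $\rho=R$ and $\rho=C_k$ for each $k$ proves the proposition. I expect the main obstacle to be the semi-Markov bookkeeping in the second step: verifying that the variable-length observation-sequence kernel $P$ collapses correctly onto $\tilde{T}$ (and that the belief remains a sufficient statistic when transitions span several time steps); the discounting-by-elapsed-time and the interchange of summation with expectation are routine once that is in place.
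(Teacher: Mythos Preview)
Your proposal is correct and aligns with the paper's approach: the paper's entire proof is a one-line appeal to Theorem~2 of \citeauthor{vien2015hierarchical}~\cite{vien2015hierarchical} together with the remark that vector costs are handled identically to scalar rewards, and your opening reduction to a single generic signal $\rho$ is precisely that remark. The history--belief correspondence and tower-property argument you sketch is the standard proof of such an equivalence (presumably the content of the cited theorem), so you are simply writing out what the paper defers to a citation.
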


The proof follows directly from Theorem 2 of ~\citeauthor{vien2015hierarchical}~\cite{vien2015hierarchical}, with vector costs and cost-values treated analogously as scalar rewards and reward-values in the original proof. This theoretical result lays the groundwork for \algname\ as it allows us to solve CPOSMDPs by solving their equivalent belief-state CSMDPs.

\begin{algorithm}[H]
    \caption{Constrained Options Belief Tree Search} \label{alg:cobets}
    \begin{algorithmic}[1]
                \Procedure{SelectOption}{$b,\hat{\vect{c}}$}
            \State $\vect{\lambda} \gets \vect{\lambda}_0$
            \For{$i \in 1:n$}
                \State $Q_\lambda(b\blue{\hat{a}}) := Q(b\blue{\hat{a}})-\vect{\lambda}^\top \vect{Q}_C(b\blue{\hat{a}})$
                \State $\Call{Simulate}{b, \blue{\hat{\vect{c}}}, d_\text{max}}$
                \State $\blue{\hat{a}} \gets \argmax_{\blue{\hat{a}}}  Q_\lambda(b\blue{\hat{a}})$
                \State $\vect{\lambda} \gets [\vect{\lambda} + \alpha_i (\vect{Q}_C(b\blue{\hat{a}})-\hat{\vect{c}})]^+$
            \EndFor
            \State $\textbf{return } \argmax_{\blue{\hat{a}}}  Q(b\blue{\hat{a}}) \text{ s.t. } \vect{Q}_C(b\blue{\hat{a}}) \leq \hat{\vect{c}}$
        \EndProcedure
        \Procedure {OptionProgWiden}{$b, \blue{\hat{\vect{c}}}$}
            \If{$|C(b)| \leq k_a N(b)^{\alpha_a}$}
                \State $\blue{\hat{a} \gets \Call{SampleNextOption}{b, [\hat{\vect{c}}]^+}}$
                \State $C(b) \gets C(b) \cup \{\blue{\hat{a}}\}$
            \EndIf
            \State $Q_{\lambda U C B}(b\blue{\hat{a}}) := Q_{\lambda}(b\blue{\hat{a}}) + \kappa \sqrt{\frac{\log N(b)}{N(b\blue{\hat{a}})}}$
            \State $\textbf{return } \argmax_{\blue{\hat{a}}}  Q_{\lambda U C B}(b\blue{\hat{a}})$
        \EndProcedure
        \Procedure {Simulate}{$b$, $\blue{\hat{\vect{c}}}$, $d$}        
            \If{$d \leq 0$}
                \State \textbf{return} $0, \vect{0}$
            \EndIf
            \State $\blue{\hat{a} \gets \Call{OptionProgWiden}{b, \hat{\vect{c}}}}$
            \If{$|C(b\blue{\hat{a}})| \leq k_o N(b\blue{\hat{a}})^{\alpha_o}$}
                \State $\blue{b',\tilde{r},\tilde{\vect{c}} \gets G_\text{PF($m$)}(b,\Call{Action}{\hat{a},b})}$
                \State $\blue{\tau \gets 1}$
                \While $\blue{\neg \Call{Terminate}{\hat{a},b'} \land (d-\tau > 0)}$
                    \State $\blue{b',r,\vect{c} \gets G_\text{PF($m$)}(b',\Call{Action}{\hat{a},b'})}$
                    \State $\blue{\tilde{r} \gets \tilde{r} + \gamma^\tau r}$
                    \State $\blue{\tilde{\vect{c}} \gets \tilde{\vect{c}} + \gamma^\tau \vect{c}}$
                    \State $\blue{\tau \gets \tau + 1}$
                \EndWhile
                \State $C(b\blue{\hat{a}}) \gets C(b\blue{\hat{a}}) \cup \{(\blue{b',\tilde{r},\tilde{\vect{c}}, \tau})\}$
                \State $V',\vect{C}' \gets \Call{EstimateValue}{b', \blue{\frac{\hat{\vect{c}}-\tilde{\vect{c}}}{\gamma^\tau}},d-\blue{\tau}}$
            \Else
                \State $\blue{b',\tilde{r},\tilde{\vect{c}}, \tau} \gets \text{sample uniformly from } C(b\blue{\hat{a}})$
                \State $V',\vect{C}' \gets \Call{Simulate}{b', \blue{\frac{\hat{\vect{c}}-\tilde{\vect{c}}}{\gamma^\tau}}, d-\blue{\tau}}$
            \EndIf
            \State $V \gets \blue{\tilde{r} + \gamma^\tau V'}$
            \State $\vect{C} \gets \blue{\tilde{\vect{c}} + \gamma^\tau \vect{C}'}$
            \State $N(b) \gets N(b)+1$
            \State $N(b\blue{\hat{a}}) \gets N(b\blue{\hat{a}})+1$
            \State $Q(b\blue{\hat{a}}) \gets Q(b\blue{\hat{a}}) + \frac{V - Q(b\blue{\hat{a}})}{N(b\blue{\hat{a}})}$
            \State $\vect{Q}_C(b\blue{\hat{a}}) \gets \vect{Q}_C(b\blue{\hat{a}}) + \frac{\vect{C} - \vect{Q}_C(b\blue{\hat{a}})}{N(b\blue{\hat{a}})}$
            \State \textbf{return} $V, \vect{C}$
        \EndProcedure
    \end{algorithmic}
\end{algorithm}

\subsection{Constrained Options Belief-Tree Search (\algname)}

The idea behind \algname\ is to select new options in a large CPOSMDP by planning over the equivalent belief-state CSMDP. 
\algname\ augments CPFT-DPW~\cite{jamgochian2023online}, a recent algorithm for online belief-state CMDP planning, with careful consideration for the options framework.
That is, rather than search over actions $a$, \algname\ searches over options $\hat{a}$ and samples their induced semi-Markov belief-state transitions. 

The \algname\ option-selection procedure is outlined in~\Cref{alg:cobets} with changes from CPFT-DPW highlighted in blue. The procedure is a recursive Monte Carlo tree search on a particle filter belief-state $b$. In \texttt{OptionProgWiden}, option selection (lines 13--14) is guided by a Lagrangian upper confidence bound heuristic that uses the current estimate of the dual parameters to trade off between reward and constraint objectives (line 4) and an exploration bonus based on visit counts. Dual parameters are updated between searches through dual ascent (line 7), in which constraint violations during search induce strengthening of the dual parameters and vice versa. 

After selecting a search option (line 18), \algname\ imagines executing that option until its termination, resulting in a semi-Markov belief transition (lines 20--27). Option execution uses sampled low-level actions and observations to update the $m$-state particle filter belief at every step (line 23) while tracking the discounted rewards and costs accumulated along the option trajectory. 
New leaf nodes are initialized with value estimates that can be generated from default policy rollouts or heuristics (\texttt{EstimateValue} in line 28). 
The simulated reward and cost values are used to make temporal difference updates and are then backpropagated (lines 32--38). 

Searching across a large set of options or resulting transitions necessitates techniques to limit the size of the tree. Progressive widening artificially limits the branching factor of a node as a function of its visit count $N(b)$, limiting the number of children to $|Ch(b)| \approx kN(b)^\alpha$, where $k>0$ and $\alpha\in (0,1)$ are hyperparameters that control the shape of the widening~\cite{couetoux2011continuous,sunberg2018online}. \algname\ implements progressive widening on the option space (lines 9--14) and on the semi-Markov belief-state transition space (line 19). 
Different option sampling strategies in \texttt{SampleNextOption} can ensure coverage of the option space~\cite{lim2021voronoi}. 
Since the transition distribution $T = p(b', \tau \mid b, \hat{a})$ is often continuous and uncountable, COBeTS benefits greatly from progressive widening on its belief transitions for the same reasons as large continuous POMDPs~\cite{sunberg2018online}.

Hierarchical decomposition provides computational advantages that can be estimated through the ratio in sizes between analogous CMDP and CSMDP search trees.
Consider searching across a belief-state CMDP with average action branching of cardinality $A$ and state transition branching with average cardinality $O$ alongside a belief-state CSMDP with action branching of average cardinality $c_1A$, and state transition branching with average cardinality $c_2O$ after an average of $\tau$ steps.     

With a fixed time horizon $T$, searching over the CSMDP instead of the CMDP improves computational complexity by a factor of $\mathcal{O}((c_1c_2)^{-T/\tau}(AO)^{T(\tau-1)/\tau})$, as the ratio in the computational complexity can be expressed as the ratio of the tree sizes:
\begin{equation} \label{eq:treesize}
\frac{\text{CSMDP size}}{\text{CMDP size}} = \mathcal{O}\left(\frac{(c_1c_2AO)^\frac{T}{\tau}}{(AO)^T}\right)= \mathcal{O}\left(\frac{(c_1c_2)^\frac{T}{\tau}}{(AO)^{\frac{T(\tau-1)}{\tau}}}\right)\text{.}
\end{equation}
This ratio allows us to analyze the significant improvement in computational complexity induced through a hierarchical decomposition.
If our hierarchical decomposition had the same action and observation branching factors ($c_1 = c_2 = 1$), it would result in a tree that is smaller by a factor of $(AO)^{\frac{T(\tau-1)}{\tau}}$.
This gives significant leeway and allows us to, for example, compensate for designing a large number of many-step options. 

\subsection{Maintaining Feasibility Anytime with Options}

Though combining Lagrangian Monte Carlo Tree Search with dual ascent guides search away from constraint violations in the limit, it does not guarantee anytime constraint satisfaction. When executing a hierarchical controller, runtime monitoring can be used to ensure safety online by terminating options and replanning in case of impending constraint violations. However, unsafe search could still lead to states where safe replanning is not possible. In this section, we show that when options are feasible when executed under an assigned budget for their decision epoch, \algname\ can maintain global feasibility anytime. To show this, we first define local feasibility, one-step global feasibility, and global feasibility.

\begin{definition}
    \label{def:locally-feasible-option}
    An option $\hat{a}_\ep$ chosen at decision epoch $e$ in $b_\ep$ is \emph{locally feasible} given a budget $\hat{\mathbf{c}}_\ep$ if $\mathbf{Q}_\mathbf{C}^\pi(b_\ep, \hat{a}_\ep) \leq \hat{\mathbf{c}}_e$.
\end{definition}
\begin{definition}
    \label{def:one-step-global-feasibility}
    An option $\hat{a}_\ep$ chosen at decision epoch $\ep$ in $b_\ep$ is \emph{one-step globally feasible} if $\mathbf{C}_{\ep-1} + \mathbf{Q}_{\mathbf{C},\ep}^\pi(b_\ep, \hat{a}_\ep) \leq \hat{\mathbf{c}}$, where $\mathbf{C}_{\ep-1} = \sum_{\ep'=0}^{\ep-1} \gamma^{t_{\ep'}}\tilde{\mathbf{c}}_{\ep'} = \sum_{t=0}^{t_\ep-1} \gamma^{t}\mathbf{C}(b_t,a_t)$.
\end{definition}
\begin{definition}
    \label{def:global-feasibility}
    An algorithm or policy $\pi$ is said to be \emph{globally feasible} if $\mathbf{V}_{\mathbf{C}}^\pi(b_0) \leq \hat{\mathbf{c}}$.
\end{definition}

Informally, a locally feasible option is guaranteed to satisfy a set cost budget while it is in control, a one-step globally feasible option can be applied once and is guaranteed to satisfy the global budget, and global feasibility states that the original constraints from~\cref{eq:cpomdp-objective-constraints} are satisfied.

With these definitions,~\cref{prop:local-feasible-option-with-hatct-one-step-globally-feasible} below shows that for a CPOSMDP, if a locally feasible option is chosen with a particular assignment of $\hat{\mathbf{c}}_\ep$, then it ensures that one-step is globally feasible.

\begin{proposition}
    \label{prop:local-feasible-option-with-hatct-one-step-globally-feasible}
    For policy $\pi$ and locally feasible option $\hat{a}_\ep$ at decision epoch $\ep$ with accumulated costs $\vect{C}_{\ep-1}$, if $\hat{\mathbf{c}}_\ep$ = $\frac{\hat{\mathbf{c}} - \mathbf{C}_{\ep-1}}{\gamma^{t_{\ep}}} \geq 0$ then $\hat{a}_\ep$ is one-step globally feasible.
\end{proposition}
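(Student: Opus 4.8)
\emph{Proof plan.}
The plan is to obtain the one-step global feasibility condition of~\Cref{def:one-step-global-feasibility} directly from the local feasibility hypothesis of~\Cref{def:locally-feasible-option}, the only real work being to reconcile the two discounting reference frames that the two definitions implicitly use. The epoch-local cost-to-go $\mathbf{Q}_{\mathbf{C}}^\pi(b_\ep,\hat{a}_\ep)$ in~\Cref{def:locally-feasible-option} discounts the semi-Markov cost stream accumulated from decision epoch $\ep$ onward with the clock reset to $0$ at global time $t_\ep$, whereas the quantity $\mathbf{Q}_{\mathbf{C},\ep}^\pi(b_\ep,\hat{a}_\ep)$ appearing in~\Cref{def:one-step-global-feasibility} discounts that same stream all the way back to global time $0$.

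First I would make that relationship precise. Writing $\mathbf{Q}_{\mathbf{C},\ep}^\pi(b_\ep,\hat{a}_\ep)$ and $\mathbf{Q}_{\mathbf{C}}^\pi(b_\ep,\hat{a}_\ep)$ as expectations over the same option-induced trajectory, every transition occurring after global time $t_\ep$ is delayed by exactly $t_\ep$ additional steps relative to the epoch-local clock, so every term in the expectation picks up a common factor $\gamma^{t_\ep}$, giving $\mathbf{Q}_{\mathbf{C},\ep}^\pi(b_\ep,\hat{a}_\ep)=\gamma^{t_\ep}\,\mathbf{Q}_{\mathbf{C}}^\pi(b_\ep,\hat{a}_\ep)$. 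This identity follows directly from the definition of the CPOSMDP cost value functions and mirrors the analogous scalar-reward relation in the SMDP treatment of~\cite{vien2015hierarchical}; pinning it down cleanly (without conflating the epoch index $\ep$, the start time $t_\ep$, and the duration $\tau_\ep$) is the one place where care is genuinely needed, and I expect it to be the main, essentially bookkeeping, obstacle.

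With the rescaling identity in hand the rest is a substitution. Local feasibility of $\hat{a}_\ep$ under the assigned budget gives $\mathbf{Q}_{\mathbf{C}}^\pi(b_\ep,\hat{a}_\ep)\le\hat{\mathbf{c}}_\ep$ componentwise; multiplying through by $\gamma^{t_\ep}\ge 0$ preserves the inequality, and plugging in $\hat{\mathbf{c}}_\ep=(\hat{\mathbf{c}}-\mathbf{C}_{\ep-1})/\gamma^{t_\ep}$ yields
\[
\mathbf{C}_{\ep-1}+\mathbf{Q}_{\mathbf{C},\ep}^\pi(b_\ep,\hat{a}_\ep)
=\mathbf{C}_{\ep-1}+\gamma^{t_\ep}\,\mathbf{Q}_{\mathbf{C}}^\pi(b_\ep,\hat{a}_\ep)
\le\mathbf{C}_{\ep-1}+\gamma^{t_\ep}\hat{\mathbf{c}}_\ep
=\mathbf{C}_{\ep-1}+\bigl(\hat{\mathbf{c}}-\mathbf{C}_{\ep-1}\bigr)=\hat{\mathbf{c}}\text{,}
\]
which is exactly the one-step global feasibility condition. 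I would close by remarking that the hypothesis $\hat{\mathbf{c}}_\ep\ge 0$ is what makes the assigned budget a legitimate argument for~\Cref{def:locally-feasible-option} in the first place, and that $\mathbf{C}_{\ep-1}$ cancels precisely because the remaining budget is rescaled by $\gamma^{t_\ep}$ --- the same rescaling performed during execution in~\Cref{alg:common} and during search in~\Cref{alg:cobets}.
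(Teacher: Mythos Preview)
Your proposal is correct and follows essentially the same approach as the paper's proof: both start from the local feasibility inequality $\mathbf{Q}_{\mathbf{C}}^\pi(b_\ep,\hat{a}_\ep)\le\hat{\mathbf{c}}_\ep$, multiply through by $\gamma^{t_\ep}$, substitute the assigned budget, and identify the resulting expression with $\mathbf{Q}_{\mathbf{C},\ep}^\pi(b_\ep,\hat{a}_\ep)$ to recover the one-step global feasibility condition. Your treatment is slightly more explicit in isolating the rescaling identity $\mathbf{Q}_{\mathbf{C},\ep}^\pi=\gamma^{t_\ep}\mathbf{Q}_{\mathbf{C}}^\pi$ as a standalone step, whereas the paper writes it inline via the expectation, but the argument is the same.
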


\begin{proof}
By definition of a locally feasible option and the choice of $\hat{\mathbf{c}}_e$:
\begin{align*}
    &\mathbf{Q}_{\mathbf{C}}(b_\ep, \hat{a}_\ep) \leq \hat{\mathbf{c}}_\ep = \frac{\hat{\mathbf{c}} - \mathbf{C}_{\ep-1}}{\gamma^{t_{\ep}}}\\
    &\mathbf{C}_{\ep-1} + \gamma^{t_{\ep}} \mathbb{E}\Big[\sum_{t=t_{\ep}}^\infty \gamma^{t-t_{\ep}} \mathbf{C}(b_t, a_t) \mid b_\ep, \hat{a}_\ep,\pi \Big] \leq \hat{\mathbf{c}} \\
    &\mathbf{C}_{\ep-1} + \mathbf{Q}_{\mathbf{C},\ep}^\pi(b_\ep, \hat{a}_\ep) \leq \hat{\mathbf{c}} \text{.}
\end{align*}
Thus, by~\cref{def:one-step-global-feasibility}, $\hat{a}_e$ is one-step globally feasible.
\end{proof}

These results imply that \algname\ is globally feasible when its options are locally feasible, that is, when they satisfy the budgets passed to \texttt{SampleNextOption} (line 11).

\begin{proposition}
    \label{prop:cobets-globally-feasible}
    \algname\ is globally feasible if all its options $\hat{a}_e$ are locally feasible given \algname\ assignments of $\hat{\mathbf{c}}_\ep \geq 0$ for all $\ep$.
\end{proposition}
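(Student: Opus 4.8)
The plan is to prove this by induction over decision epochs, using Proposition \ref{prop:local-feasible-option-with-hatct-one-step-globally-feasible} as the one-step engine and then telescoping. The key observation is that COBeTS (via \texttt{SampleNextOption} on line 11 of Algorithm \ref{alg:cobets}, and the budget update $[\tfrac{\hat{\vect{c}}-\tilde{\vect{c}}}{\gamma^\tau}]^+$ propagated through \texttt{Simulate}) is exactly maintaining the recursion $\hat{\vect{c}}_{\ep} = [\tfrac{\hat{\vect{c}}_{\ep-1} - \tilde{\vect{c}}_{\ep-1}}{\gamma^{\tau_{\ep-1}}}]^+$, which after unrolling gives $\hat{\vect{c}}_{\ep} = \tfrac{\hat{\vect{c}} - \vect{C}_{\ep-1}}{\gamma^{t_{\ep}}}$ whenever the clipping is inactive, matching the hypothesis of Proposition \ref{prop:local-feasible-option-with-hatct-one-step-globally-feasible}. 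So first I would establish this bookkeeping identity: show that the budget COBeTS hands to the option at epoch $\ep$ equals $\tfrac{\hat{\vect{c}} - \vect{C}_{\ep-1}}{\gamma^{t_{\ep}}}$, using the definitions $t_{\ep+1} = t_{\ep} + \tau_{\ep}$ and $\vect{C}_{\ep-1} = \sum_{\ep'=0}^{\ep-1}\gamma^{t_{\ep'}}\tilde{\vect{c}}_{\ep'}$. The assumption $\hat{\vect{c}}_{\ep} \ge 0$ for all $\ep$ ensures the $[\cdot]^+$ clipping never fires, so this identity holds with equality.

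Next I would set up the induction. Let $\pi$ denote the policy induced by COBeTS's option selection. For the base case, $\vect{C}_{-1} = \vect{0}$, so the epoch-$0$ budget is $\hat{\vect{c}}_0 = \hat{\vect{c}} \ge 0$; since $\hat{a}_0$ is locally feasible, Proposition \ref{prop:local-feasible-option-with-hatct-one-step-globally-feasible} gives $\vect{Q}_{\vect{C},0}^\pi(b_0,\hat{a}_0) \le \hat{\vect{c}}$. For the inductive step, I would use the semi-Markov Bellman recursion for cost-value functions: $\vect{V}_{\vect{C}}^\pi(b_{\ep}) = \tilde{\vect{c}}_{\ep} + \gamma^{\tau_{\ep}} \mathbb{E}[\vect{V}_{\vect{C}}^\pi(b_{\ep+1})]$, equivalently $\vect{Q}_{\vect{C},\ep}^\pi(b_{\ep},\hat{a}_{\ep}) = \tilde{\vect{c}}_{\ep} + \gamma^{\tau_{\ep}}\mathbb{E}[\vect{Q}_{\vect{C},\ep+1}^\pi(b_{\ep+1},\hat{a}_{\ep+1})]$. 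Assuming $\vect{C}_{\ep-1} + \vect{Q}_{\vect{C},\ep}^\pi(b_{\ep},\hat{a}_{\ep}) \le \hat{\vect{c}}$ (the one-step global feasibility at epoch $\ep$), I substitute the Bellman expansion and observe $\vect{C}_{\ep} = \vect{C}_{\ep-1} + \gamma^{t_{\ep}}\tilde{\vect{c}}_{\ep}$, so the inequality rearranges into $\vect{C}_{\ep} + \gamma^{t_{\ep+1}}\mathbb{E}[\vect{Q}_{\vect{C},\ep+1}^\pi(b_{\ep+1},\hat{a}_{\ep+1})] \le \hat{\vect{c}}$; applying Proposition \ref{prop:local-feasible-option-with-hatct-one-step-globally-feasible} at epoch $\ep+1$ (its hypothesis holds by the bookkeeping identity and the $\hat{\vect{c}}_{\ep+1}\ge 0$ assumption, together with local feasibility of $\hat{a}_{\ep+1}$) closes the loop. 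Taking the limit over epochs yields $\vect{V}_{\vect{C}}^\pi(b_0) = \vect{C}_{-1} + \vect{Q}_{\vect{C},0}^\pi(b_0,\hat{a}_0) \le \hat{\vect{c}}$, which is Definition \ref{def:global-feasibility}. Finally I would invoke Proposition \ref{prop:belief-state-CSMDP-equivalent-to-CPOSMDP} to transfer this from the belief-state CSMDP back to the CPOSMDP, and note the cost-values here are with respect to the true belief-state transition, not the $m$-particle approximation, so strictly the statement concerns the exact-filter setting (or the claim is "feasible up to particle filter error").

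The main obstacle I anticipate is handling the expectation and the infinite-horizon limit cleanly: the one-step argument of Proposition \ref{prop:local-feasible-option-with-hatct-one-step-globally-feasible} is pointwise in $b_{\ep}$, but $b_{\ep+1}$ is random, so I need the inductive hypothesis "one-step globally feasible at epoch $\ep+1$" to hold \emph{almost surely} (i.e., for every reachable $b_{\ep+1}$), which requires that local feasibility is assumed for the options COBeTS would select in every reachable belief — this is exactly what the proposition's phrasing "all its options $\hat{a}_\ep$ are locally feasible" grants. A secondary subtlety is making sure the nonnegativity of costs is used where needed (it guarantees the partial sums $\vect{C}_{\ep}$ are monotone and the tail $\gamma^{t_{\ep}}\vect{Q}_{\vect{C},\ep}^\pi \ge 0$, so truncating the induction at finite depth $d_{\max}$ still gives a valid upper bound), and confirming that $\hat{\vect{c}}_{\ep}\ge 0$ for all $\ep$ is consistent — it is implied by one-step global feasibility plus nonnegative costs, so the assumption is essentially that a feasible budget assignment exists throughout.
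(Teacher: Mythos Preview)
Your proposal is correct and shares the paper's core approach: confirm that \algname's budget assignment $\hat{\vect{c}}_\ep = \bigl[\tfrac{\hat{\vect{c}} - \vect{C}_{\ep-1}}{\gamma^{t_\ep}}\bigr]^+$ matches the hypothesis of Proposition~\ref{prop:local-feasible-option-with-hatct-one-step-globally-feasible}, then invoke that proposition at every epoch. The paper's own proof is four lines (``by construction''), and your Bellman-recursion induction is in fact unnecessary here: under the paper's definitions $\vect{Q}_{\vect{C},0}^\pi(b_0,\hat{a}_0)=\vect{V}_{\vect{C}}^\pi(b_0)$, so the epoch-$0$ instance of one-step global feasibility---your base case---already \emph{is} the global-feasibility conclusion.
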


\begin{proof}
By construction.
Consider any decision epoch $\ep$.
As given, consider any \algname\ option $\hat{a}_\ep$.
By definition of \algname, it assigns $\hat{\mathbf{c}}_\ep = \left[\frac{\hat{\mathbf{c}} - \mathbf{C}_{\ep-1}}{\gamma^{t_{\ep}}}\right]^+ \geq 0$.
By~\cref{prop:local-feasible-option-with-hatct-one-step-globally-feasible}, it is one-step globally feasible.
Since this is true for all $\ep$, \algname\ is globally feasible.
\end{proof}

\section{Experiments}

Our experiments consider online planning in four large safety-critical, partially observable planning problems in order to empirically demonstrate the efficacy of \algname. We compare \algname\ against different non-hierarchical solvers on our target domains, investigate its anytime properties, and show that it can yield better plans even when the number of options far exceeds the number of underlying actions. 

We use Julia 1.9 and the POMDPs.jl framework for our experiments~\cite{egorov2017pomdps}. In the following sections, we outline the CPOMDP target problems, briefly describe their hierarchical decompositions, and discuss the main results from our experiments. For full experimentation details, including CPOMDP modeling details, the precise options crafted, and choices of hyperparameters, please refer to our code, which we have open-sourced at \texttt{github.com/sisl/COBTSExperiments}.

\subsection{CPOMDP Problems and Option Policies}
 
We highlight the CPOMDP problem domains used in our experiments below, whether their state, action, and observation spaces are (D)iscrete or (C)ontinuous, and provide an overview of the types of options crafted for execution. 

\subsubsection{Constrained LightDark~\cite{jamgochian2023online} (C, D, C)} 

In this one-dimensional robot localization problem adapted from LightDark \cite{platt2010belief,sunberg2018online}, the robot must first safely localize itself before navigating to the goal. The robot can move in discrete steps of $\mathcal{A} = \{0, \pm 1, \pm 5, \pm 10\}$ in order to navigate to $s\in[-1,1]$, take action $0$, and receive $+100$ reward, but taking action $0$ elsewhere accrues a $-100$ reward. The robot accrues a per-step reward of $-1$. The agent starts in the dark region, $b_0 = \mathcal{N}(2,2^2)$, and can navigate towards the light region at $s=10$ to help localize itself with less noisy observations. However, the robot must avoid entering a constraint region above $s=12$ where it will receive a per-step cost of $1$ and violate a budget of $\hat{c}=0.1$. As such, taking the $+10$ action immediately would violate the constraint in expectation. 

We template four types of options for this problem. \texttt{GoToGoal} greedily navigates the robot's mean position to the goal and terminates. \texttt{LocalizeFast} greedily navigates the robot's mean position to the light region until the belief uncertainty is sufficiently small. \texttt{LocalizeFromBelow} adjusts the navigation technique for localization so that the robot's mean position does not overshoot the light region. \texttt{LocalizeSafe} uses the robot's position uncertainty while localizing to minimize the risk that the robot violates the constraint. 

\subsubsection{Constrained Spillpoint~\cite{jamgochian2023online} (C, C, C)} This problem models safe geological carbon capture and sequestration around uncertain subsurface geometries and properties. In the original POMDP~\cite{spillpoint}, instances of CO$_2$ leaking through faults in the geometry are heavily penalized, both for the presence of a leak and for the total amount leaked. The constrained adaptation imposes a constraint of $\hat{c}=\num{1e-6}$ to ensure minimal CO$_2$ leakage.

The options for the spillpoint problem include \texttt{InferGeology} and \texttt{SafeFill}. The \texttt{InferGeology} begins by injecting 90\% of the CO$_2$ volume of the lowest-volume instance of the geology according to the current belief. Then, a sequence of observations of conducted which provides information on the shape of the geology (these observations indicate the presence of CO$_2$ at various spatial locations). The \texttt{InferGeology} is templated so a variety of observation sequences can be selected. The \texttt{SafeFill} option involves injecting 90\% of the CO$_2$ volume of the lowest-volume instance of the geology according to the current belief and then terminating the episode. The options (with five versions of \texttt{InferGeology}) were combined with the standard set of five individual actions for a total of \num{11} options.

\subsubsection{Constrained Bumper Roomba (C, D, D)} 

Roomba models a robot with an uncertain initial pose in a fixed environment as it uses its sensors to navigate to a goal region while avoiding a penalty region~\cite{wu2021adaptive}. In this work, we augment the problem to include a constraint region that the robot must avoid traveling through as it navigates to the goal. States are defined by the continuous pose of the robot on the map, actions allow the robot to turn or move by discrete amounts, and while the robot does not have access to its true pose, in Bumper Roomba, it receives a binary observation when it collides with a wall. 

Bumper Roomba crafts three types of options: \texttt{TurnAndGo} options turn the robot a fixed amount then navigate until the robot collides with a wall, \texttt{GreedyGoToGoal} greedily navigates to the goal using the robot's mean pose, and \texttt{SafeGoToGoal} navigates to the goal while imposing a barrier function around the constraint region.

\subsubsection{Constrained Lidar Roomba (C, D, C)} 
This CPOMDP augments Constrained Bumper Roomba with a Lidar sensor that noisily observes the distance to the nearest wall along the robot's heading, with noise proportional to the distance. Rather than using \texttt{TurnAndGo} options for localization, Constrained Lidar Roomba implements \texttt{Spin} options that turn the robot for different periods of time and with different turn radii to localize.

\subsection{Experiments and Discussion}

\begin{table*}[htpb]
\ra{1.2}
\centering
\scalebox{0.95}{\begin{tabular}{@{}r RRRRRRRR @{}}
\toprule
&\multicolumn{2}{c}{\underline{\quad\quad\textbf{LightDark}\quad\quad}}
&\multicolumn{2}{c}{\underline{\quad\quad\textbf{Spillpoint}\quad\quad}}
&\multicolumn{2}{c}{\underline{\quad\quad\quad\textbf{Bumper Roomba}\quad\quad\quad}}
&\multicolumn{2}{c}{\underline{\quad\quad\quad\textbf{Lidar Roomba}\quad\quad\quad}}\\
\textbf{Model} &  
{\hat{V}_R}  & {\hat{V}_C\ [\leq 0.1]} & 
{\hat{V}_R}  & {\hat{V}_C\ [\leq 10^{-6}]} & 
{\hat{V}_R}  & {\hat{V}_C\ [\leq 0.1]} &
{\hat{V}_R}  & {\hat{V}_C\ [\leq 0.1]} \\
\midrule
\texttt{\algname}           & 
\mathbf{68.6} \scriptstyle\pm 0.7 & 0.027 \scriptstyle\pm 0.015  & 
\mathbf{3.40} \scriptstyle\pm 0.66 & 0.000 \scriptstyle\pm 0.000  & 
\mathbf{5.73} \scriptstyle\pm 0.67 & 0.038 \scriptstyle\pm 0.038  & 
\mathbf{5.23} \scriptstyle\pm 0.67 & 0.020 \scriptstyle\pm 0.019  \\ 
\texttt{CPFT-DPW}            & 
5.9 \scriptstyle\pm 7.7 & 0.000 \scriptstyle\pm 0.000  &  
1.50 \scriptstyle\pm 0.39 & 0.000 \scriptstyle\pm 0.000  & 
-4.76 \scriptstyle\pm 0.00 & 0.036 \scriptstyle\pm 0.036  & 
-4.57 \scriptstyle\pm 0.19 & 0.000 \scriptstyle\pm 0.000  \\ 
\texttt{CPOMCPOW}            & 
-9.2 \scriptstyle\pm 8.0 & 0.032 \scriptstyle\pm 0.015  & 
1.51 \scriptstyle\pm 0.40 & 1.0 \cdot 10^{-5} \scriptstyle\pm 0.9 \cdot 10^{-5}  & 
-3.76 \scriptstyle\pm 0.42 & 0.000 \scriptstyle\pm 0.000  & 
-4.55 \scriptstyle\pm 0.20 & 0.000 \scriptstyle\pm 0.000  \\ 
\bottomrule
\end{tabular}

}
\caption{Online CPOMDP algorithm demonstrations comparing mean discounted cumulative rewards ($\hat{V}_R$) and costs ($\hat{V}_C$) across 100 LightDark simulations, 10 Spillpoint simulations, and 50 Bumper and Lidar Roomba simulations. \algname\ consistently satisfies constraints while outperforming both the CPFT-DPW baseline and the CPOMCPOW baseline.}
\label{table:results}
\vspace{-0.6cm}
\end{table*}

\subsubsection{CPOMDP algorithm comparison}
To evaluate \algname, we measure the mean discounted cumulative reward and cost for different planning episodes on the aforementioned CPOMDP domains. We benchmark \algname\ against the closest non-hierarchical online CPOMDP planning algorithms, CPOMCPOW and CPFT-DPW~\cite{jamgochian2023online}, which perform Lagrangian MCTS on the state spaces and belief-state spaces respectively. \Cref{table:results} summarizes the performance of the algorithms on the different CPOMDP domains, averaged across 100 LightDark, 10 Spillpoint, and 50 Roomba simulations.

In summary, we see that \algname\ is able to significantly outperform against baselines on all domains while satisfying cost constraints. In both Roomba problems, baselines are unable to search deep enough to localize and get to the goal, and instead meander while accruing step penalties to avoid the risk of violating the constraint.

\subsubsection{Anytime constraint satisfaction}
\begin{figure}[t]
\centerline{\input{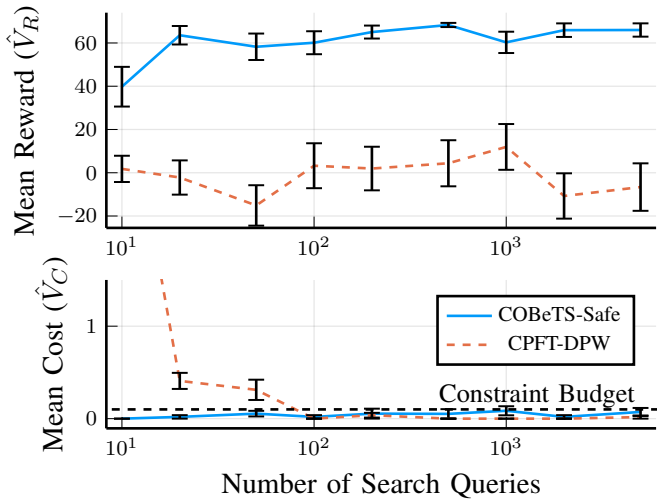}}
\vspace{-0.3cm}
\caption{Mean cumulative discounted rewards (above) and costs (below) vs. number of tree queries across 50 Constrained LightDark simulations when using \algname\ with feasible options. \algname\ stays safe anytime while CPFT-DPW only satisfies constraints in the limit.}
\label{fig:anytimesafety}
\vspace{-0.3cm}
\end{figure}

To highlight the anytime constraint satisfaction with \algname\, we vary the number of search queries in the Constrained LightDark CPOMDP with the robot restricted to \texttt{GoToGoal} and two different \texttt{LocalizeSafe} options, all of which are feasible from the initial belief. The results averaged across 50 simulations are depicted in~\Cref{fig:anytimesafety}. We see that even with low numbers of search queries, \algname\ satisfies the constraints while achieving high reward, while CPFT-DPW only satisfies the constraints as the number of queries is increased.

\subsubsection{Searching over many options}
\begin{figure}[htpb]
\centerline{\input{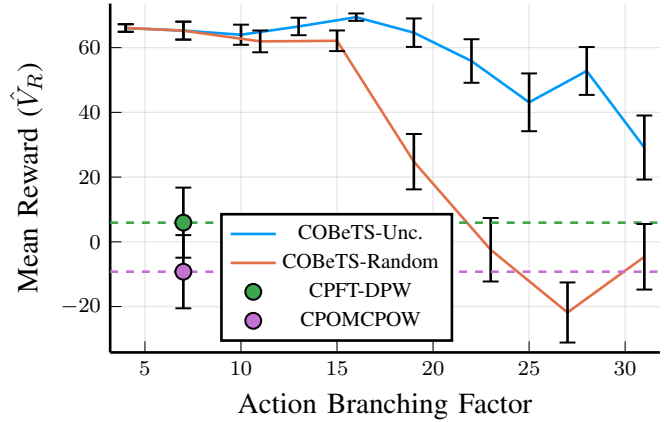}}
\vspace{-0.3cm}
\caption{Mean cumulative discounted rewards for different numbers of options averaged over 50 Constrained LightDark simulations. All costs are feasible (not shown). \algname\ can retain high reward at larger action branching factors because hierarchy induces a smaller overall tree size.}
\label{fig:branching}
\vspace{-0.3cm}
\end{figure}

Finally, we investigate the impact that the action branching factor has on search quality in LightDark. We vary the branching factor in the \algname\ search by adding options using different strategies. \algname-Unc. adds options that localize to different sampled uncertainties, while \algname-Random adds options that execute three randomly selected non-terminal actions. 
The results, depicted in \Cref{fig:branching}, show that \algname\ still finds quality plans with a significant number of options to search over, much greater than the number of actions in the underlying problem (seven). These results support the analysis presented in~\Cref{eq:treesize}, that search complexity reduction from hierarchical decomposition can compensate for increased action branching.

\section{Conclusion}

Safe robotic planning under state and transition uncertainty can be naturally expressed as a CPOMDP, but CPOMDPs are extremely difficult to solve in large problem domains. Recent works scaled online search-based CPOMDP planning to large spaces~\cite{lee2018monte,jamgochian2023online}, but with limited scope or expertly crafted heuristics. In robotics, planning can often be favorably decomposed hierarchically, separating high-level action primitives and low-level control. In this work, we introduced Constrained Options Belief Tree Search (\algname) to improve online CPOMDP planning when favorable hierarchies exist by performing a belief-state Monte Carlo tree search over options. We showed that \algname\ with feasible options will satisfy constraints anytime and demonstrated its success on large planning domains where recent methods fail.

\textbf{Limitations and future work:} A significant limitation of \algname\ is the necessity to craft low-level policies. Recent work uses language models to construct policies automatically~\cite{liang2023code} and if combined with \algname, could enable hierarchical constrained search to compensate for uncertainty in policy generation. A second limitation is that though \algname\ biases the search toward safety, it requires feasible options in order to satisfy constraints anytime. Future work can examine propagating cost bounds \cite{undurti2010online} or using search heuristics generated from past experience~\cite{cai2022closing,moss2023betazero,parthasarathy2023cmcts} or natural language priors~\cite{zhao2023large} to achieve safety under more general conditions. 

\renewcommand*{\bibfont}{\small}
\printbibliography
\end{document}